\pgfplotsset{compat=1.18}
\newtheorem{theorem}{Theorem}
\newtheorem{lemma}[theorem]{Lemma}
\title{Increasing LLM response trustworthiness using voting ensembles
}
\author{
  Aparna Nair-Kanneganti*\\
  Harvard University \\
  \texttt{aparnank@mit.edu} \\
\And
  Trevor J. Chan*\\
  University of Pennsylvania \\
  \texttt{tjchan@seas.upenn.edu} \\
\And
  Shir Goldfinger\\
  University of Pennsylvania \\
\And
  Emily Mackay\\
  University of Pennsylvania \\
\And
  Brian Anthony\\
  Massachusetts Institute of Technology \\
\And
  Alison Pouch \\
  University of Pennsylvania \\
}
\begin{document}
\maketitle
\let\thefootnote\relax\footnotetext{* equal contribution}

\begin{abstract}
Despite huge advances, LLMs still lack convenient and reliable methods to quantify the uncertainty in their responses, making them difficult to trust in high-stakes applications. One of the simplest approaches to eliciting more accurate answers is to select the mode of many responses, a technique known as ensembling. In this work, we expand on typical ensembling approaches by looking at ensembles with a variable voting threshold. We introduce a theoretical framework for question answering and show that, by permitting ensembles to "abstain" from providing an answer when the dominant response falls short of the threshold, it is possible to dramatically increase the trustworthiness of the remaining answers. From this framework, we derive theoretical results as well as report experimental results on two problem domains: arithmetic problem solving and clinical-note question-answering. In both domains, we observe that large gains in answer trustworthiness can be achieved using highly restrictive voting ensembles, while incurring relatively modest reductions in response yield and accuracy. Due to this quality, voting ensembles may be particularly useful in applications - such as healthcare and data annotation - that require a high degree of certainty but which may not require that every question receive an automated answer.
\end{abstract}


\section{Introduction}
The last few years have seen an explosion in the capabilities and deployment of AI language models across varied intellectual domains. However, the phenomenon of hallucination - where a model confidently produces erroneous information - represents a persistent obstacle to the continued integration of language models \cite{shen2023chatgpt} in high-stakes environments such as healthcare, where an unfounded response can have life-or-death consequences \cite{ahmad2023creating, pal2023med, mackay2025automated}.

Model hallucination has proven tricky to address. For one, hallucinations stem from diverse sources, including epistemic uncertainty, biased training data, and the probabilistic nature of enterprise generative model inference \cite{zhang2023siren, xiao2021hallucination, he2025nondeterminism}. For another, without explicit tuning, models are generally poor estimators of their own ability \cite{kadavath2022language, lin2022teaching, lin2023generating}. Recent studies suggest that hallucination is an innate feature of well-calibrated models and will continue to present challenges even as models improve \cite{xu2024hallucination, rawte2023troubling}.

In this work, we argue for the use of voting ensembles of large language models (LLMs) to mitigate hallucinations. We provide a formal framework to characterize the LLM question space and derive under this framework the theoretical voting ensemble behavior and corresponding response properties. We validated these results by conducting experiments with simulated ensembles of agents, LLMs answering synthetic arithmetic questions, and LLMs answering questions on clinical data in context. Our findings show that ensemble voting is a viable strategy to improve the accuracy and trustworthiness of current state-of-the-art LLMs on real-world problems. 


\section{Related work}

In general, the robustness of LLMs may be improved by training-time and inference-time modifications. The former class includes methods such as increasing the model scale \cite{kaplan2020scaling, hoffmann2022training}, increasing the quality of training data \cite{chang2022data, marion2023less, chen2023alpagasus, tirumala2023d4, li2023quantity}, model fine-tuning and alignment \cite{zhou2024lima, bai2022training, ouyang2022training, wang2022self}, and mixture-of-experts architecture \cite{shazeer2017outrageously, dai2024deepseekmoe}. While undoubtedly effective, these methods incur high costs in data, compute power, and labor.

For these reasons, a large family of methods focuses on modifying the behavior of trained models at inference time in order to elicit better performance without requiring additional training or data. Early papers demonstrate the use of search-based methods to maximize the likelihood of text generated by a language model \cite{wu2016google, yang2018breaking}. Updated implementations of this approach have been proposed for state-of-the-art models as well \cite{meister2020if, xiao2021hallucination, xie2024self}. 

To address the oft-cited weaknesses of poor factual recall and limited specific domain knowledge, LLM agents may incorporate retrieval-augmented-generation \cite{lewis2020retrieval, shi2023replug} or similar techniques which rely upon access to auxiliary resources such as calculators, coding interfaces, and web search \cite{peng2023check, gao2022rarr}. That said, some of the simplest methods are also the most effective. For example, chain of thought prompting, in which a model is directed to generate intermediate reasoning steps on the path towards an answer \cite{wei2022chain, wang2022self, suzgun2022challenging}, is essential to frontier models' ability to solve problems and perform logical inference \cite{liu2024deepseek}. A conceptually similar approach asks a model to predict its own confidence in its generated responses, enabling a degree of uncertainty estimation \cite{kadavath2022language, lin2022teaching, lin2023generating, manakul2023selfcheckgpt}. 

Another common technique makes use of response aggregation in order to improve predictions. Options for aggregation include simply averaging independent responses \cite{hou2023decomposing, wang2022self, greenblatt2024getting}, repeatedly sampling and then verifying responses \cite{brown2024large}, and estimating response uncertainty on the basis of semantic variance across a set of responses \cite{kuhn2023semantic, farquhar2024detecting}. While these methods do not fundamentally change the inference behavior of models, they all serve to estimate model uncertainty, making them particularly useful for improving the reliability of LLMs. 


\section{Preliminaries}

\textbf{Definitions:} We define a question $q \in Q$ as a singular query with a discrete set of potential responses and a single correct response. In practice, the number of potential responses can be finite and small (such as true/false questions) or countably infinite (as with arithmetic problems). This definition excludes completely open-ended queries ("Write an original poem."). While not a focus of this work, previous studies have classified open-ended responses based on semantic entropy \cite{kuhn2023semantic} and semantic clustering \cite{farquhar2024detecting}. We treat each LLM agent as a map from $q$, belonging to problem domain $D \subset Q$, to response $a$ in range $A$. (An example of $D$ from arithmetic is the domain of questions querying whether a product of two integers has 6 digits; here each $q \in Q$ maps to $A := \{0, 1\}$.) An ensemble is defined here as a homogeneous collection of independent agents with a fixed voting criterion. Ensembles can collectively map a question to a response or to a no-consensus state $F: \mathcal{Q} \to \mathcal{A} \cup \{NC\}$. 

\textbf{Notation:} We use capital letters ($C$, $I$, and $NC$) to denote the three possible outcomes of querying a single agent or ensemble. The probabilities of these events are denoted $P(C)$ or equivalently $P_C$. For cases involving multiple questions or problem domains, we use $\bar{P}(C)$, $\bar{P}(I)$, and $\bar{P}(NC)$ to represent the arithmetic mean of probability over all questions.


\section{Proposed framework}
We often think of the average response to a given question of many agents as more accurate than the response of any single agent, and indeed in many real-world scenarios, we do observe benefits to response averaging. However, this is not always the case. An illustrative example is as follows: we do not expect that a group of individuals can predict the outcome of a fair coin flip any better than a single individual, no matter how large the group. To understand the circumstances under which ensembling provides a benefit to accuracy, we must understand what makes a question difficult.

Suppose for simplicity's sake that responses to a question fall into three possible categories. For a given question $q$,
\begin{itemize}[leftmargin=12pt]
   \item[] Let \( A \) be the set of all possible answers.
   \item[] Let \( C \in A \) be the correct answer.
   \item[] Let \( I \in A \) be the dominant specious answer (an answer that appears correct to the agent but is actually incorrect).
   \item[] Let \( R = A \setminus \{C, I\} \) be the set of all other incorrect answers. Some of these may also be specious, but they are selected less frequently.
\end{itemize}

\begin{wrapfigure}[24]{r}{0.3\linewidth}
    \centering
    \includegraphics[width=1\linewidth]{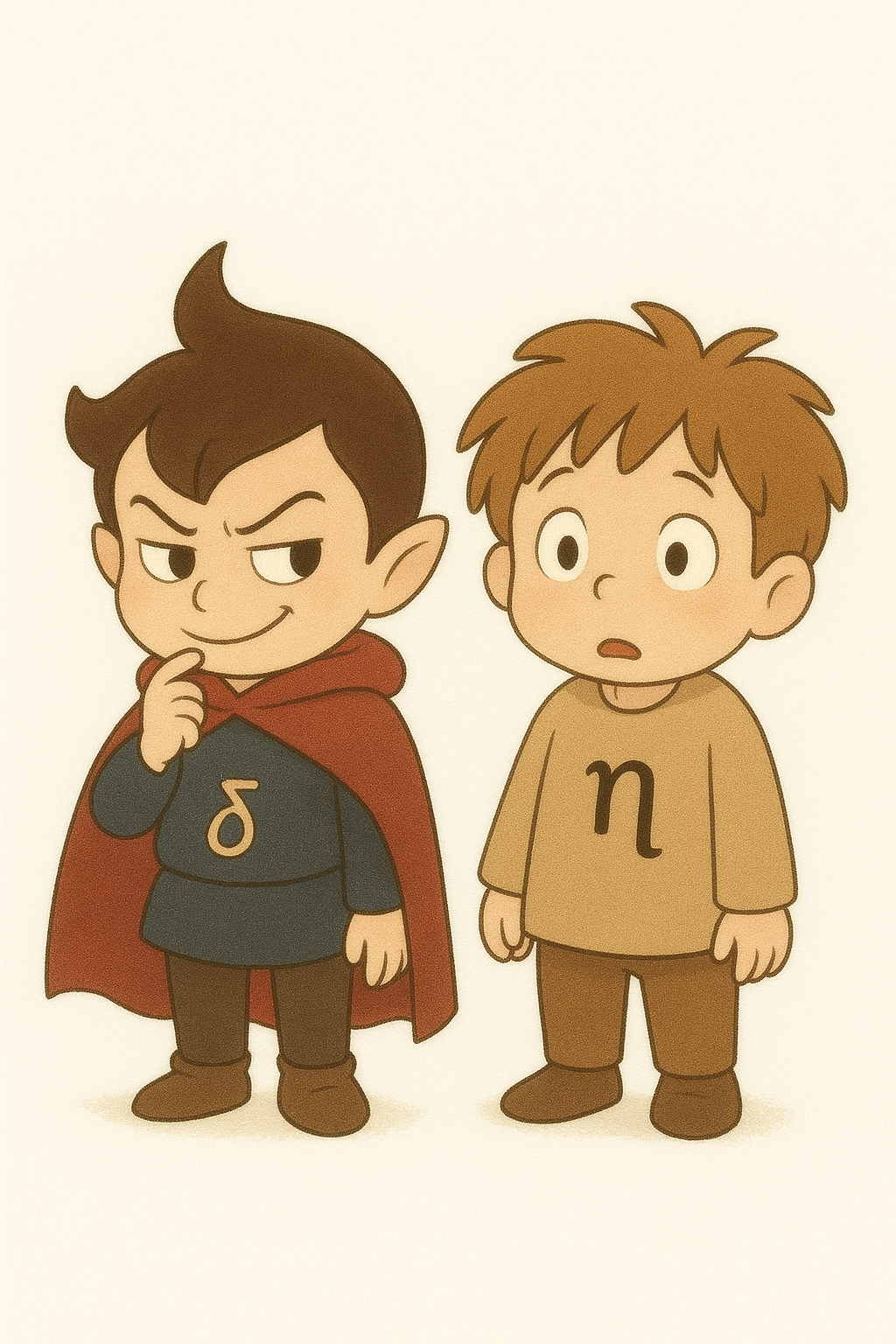}
    \caption{Question difficulty is a function of deceptiveness ($\delta$), a question's tendency to elicit a single specious answer, and bewilderment ($\eta$), the degree to which it encourages random guessing. (Illustration generated by ChatGPT.)}
    \label{fig:ensemble_accuracy}
\end{wrapfigure}

We characterize the response behavior to a question $q$ along the orthogonal axes of deceptiveness ($\delta$) and bewilderment ($\eta$)\footnote{Defined here with respect to a specific question-agent pair. The same question posed to a different agent will be characterized by a different $\delta$ and $\eta$.}:
\begin{itemize}[leftmargin=12pt]
    \item[] Deceptiveness (\( \delta \in [0,1]\)) is the probability that the agent chooses the specious answer \( I \), over the correct answer \( C \) or other answers from \( R\). For \(\delta=0\), the agent always selects the correct answer \( C \). For \( \delta = 1 \), the agent always selects the specious answer \( I \).
    \item[] Bewilderment (\( \eta \in [0,1]\)) is the probability that the agent selects an incorrect answer from \( R \), provided that the agent has not already been deceived. For \( \eta = 0 \), the agent never responds from \(R\). For \( \eta = 1 \), the agent always provides an incorrect answer \textit{that is not $I$}.
\end{itemize}

Intuitively, deceptiveness describes the tendency of a question to mislead an agent by presenting incorrect answers as plausible. Examples of highly deceptive questions are those designed to encourage fallacious logic, such as the Linda problem. These questions trip up both humans and LLMs, which are trained to emulate human language and reasoning. At the same time, some questions which deceive LLMs are trivial to humans (“How many ‘r’s are in the word strawberry?”). Bewilderment quantifies the tendency of a question to force an agent to guess randomly at the answer. Questions with high bewilderment may have a large number of plausible answers (“How many jelly beans are in this Mason jar?”). Both agents and humans will typically answer in an incorrect fashion, though responses may still cluster around a collapsed set of incorrect modes (in this case, "five hundred" may appear more frequently than "1 billion"). We define the difficulty $d$ of a problem as \(1-P(C)\). This can be expressed purely in terms of $\delta$ and $\eta$: \(d = 1 - (1-\delta)(1-\eta) \).

Under this framework, the probability that a single agent yields the correct response becomes \(P(C) = 1 - d = (1 - \eta) (1 - \delta)\). The probability that the agent is deceived and responds speciously is \(P(I) = (1 - \eta) \delta\). Finally, the probability that the agent is bewildered and provides a different incorrect response is \(\sum_i P(r_i \in R) = \eta\).

Now suppose \( n \) agents are independently queried. Let \( k \in [1..n]\) indicate the voting threshold, or the minimum number of agents that must agree on the same response for a consensus to be reached. By definition, responses from $R$ accumulate fewer votes than $I$ and cannot contribute to a consensus, so if neither \( C \) nor \( I \) receives at least \( k \) votes, the ensemble response is \(NC\), or "No consensus." Let \( X_C \) be the number of agents who respond with \( C \) and \( X_I \) be the number of agents who respond with \( I \). Then the conditions for each outcome are as follows:
\begin{itemize}[leftmargin=12pt]
\item[] A consensus around \( C \) occurs if \( X_C \geq k \) and \( X_C > X_I \).
\item[] A consensus around \( I \) occurs if \( X_I \geq k \) and \( X_I > X_C \).
\item[] No consensus occurs if neither \( X_C \geq k \) nor \( X_I \geq k \), or if \( X_I = X_C \).
\end{itemize}
In the case of identical agents, the probability of each outcome reduces to a sum of multinomial PMFs parameterized primarily by the voting threshold, and secondarily by the ensemble size:

\begin{align*}
P(C) = P_C(k, n) &= \sum_{c=k}^{n}\sum_{i=0}^{\min(c-1,n-c)} \binom{n}{c, i, n-c-i}\left((1-\eta)(1-\delta)\right)^c\left((1-\eta)\delta\right)^i\eta^{n-c-i}\\
P(I) = P_I(k, n) &= \sum_{i=k}^{n}\sum_{c=0}^{\min(i-1,n-i)} \binom{n}{c, i, n-c-i} \left((1-\eta)(1-\delta)\right)^c\left((1-\eta)\delta\right)^i\eta^{n-c-i}\\
P(NC) = P_{NC}(k, n) &= \sum_{c=0}^{k-1}\sum_{i=0}^{k-1} \binom{n}{c, i, n-c-i} \left((1-\eta)(1-\delta)\right)^c\left((1-\eta)\delta\right)^i\eta^{n-c-i}\\
      &+ \sum_{\beta=k}^{fl(n/2)}\binom{n}{\beta, \beta, n-2\beta} \left((1-\eta)(1-\delta)\right)^\beta\left((1-\eta)\delta\right)^\beta\eta^{n-2\beta}\\
      &= 1 - P(C) - P(I)
\end{align*}

For an outcome $O$, we will use the notations $P(O), \ P_O(k)$, and $P_O(k, n)$ interchangeably. We compute $P(C)$ as the joint probability that the number of correct responses is \textit{greater than} the threshold $k$, and is also the predominant response; likewise for $P(I)$. In practice, we calculate $P(NC)$ as the complement of $P(C) \cup P(I)$, thereby excluding ties and outcomes in which no response reaches the voting threshold. \footnote{Ties occur when both correct and incorrect surpass the minimum threshold but neither is dominant. It is possible but not always useful to break a tie. We discuss tiebreakers in appendix section \ref{sec:tiebreakers}.}

Voting strategies lie on a spectrum from restrictive to permissive. Strategies with a threshold-to-size ratio (\(\frac{k}{n}\)) close to 1 require high agreement between agents in the ensemble to reach a consensus and are comparatively restrictive, while strategies with a \(\frac{k}{n}\) close to zero are comparatively permissive. In practice, strategies range from plurality voting (\(k=1\)), also referred to as first-past-the-post voting, as the most permissive, and unanimous voting (\(k=n\)), requiring total agreement, as the most restrictive. We will use the words "restrictive" and "permissive" to refer to both voting strategies themselves and the resulting ensemble behavior.

In this work, we define the following performance criteria: 
\begin{itemize}[leftmargin=12pt]
\item[] The probability of an ensemble obtaining a correct response, denoted \textit{accuracy}.
\item[] The probability that any consensus answer is correct, denoted \textit{trust}, $T(k)$.
\item[] The probability that a question receives a consensus answer as opposed to a verdict of “no-consensus", denoted \textit{yield}, $Y(k)$.
\end{itemize}

\setlength{\tabcolsep}{18pt} 
\renewcommand{\arraystretch}{1.5} 
\begin{table}[h]
    \centering
    \caption{Three criteria of performance from a voting ensemble}
    \begin{tabular}{ccc}
        Accuracy & Trust & Yield \\ \hline
        $P(C)$ &  $P(C|\sim NC)$ & $P(\sim NC)$ \\
         &  $\frac{P(C)}{P(C)+P(I)}$ & $P(C)+P(I)$
    \end{tabular}
    \label{tab:performance}
\end{table}


\section{Theoretical results}
We can derive a number of theoretical results from this ensemble voting framework. Specifically, we examine the effect of ensemble composition, which comprises size and voting threshold, on the expected accuracy, trust, and yield for various abstract problem domains.

\begin{figure}[t]
    \centering
    \includegraphics[width=1\linewidth]{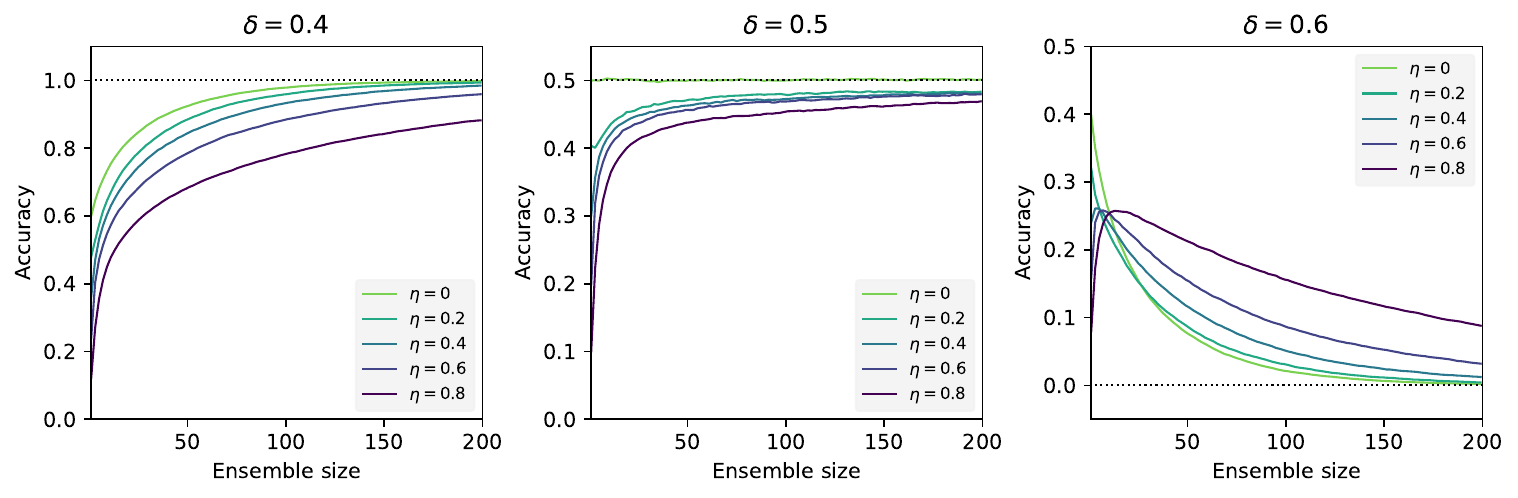}
    \caption{Simulated voting scenarios show  show that for a single question, deceptiveness $\delta$ alone dictates the ensemble accuracy as ensemble size approaches infinity. However, the rate of convergence is governed by the bewilderment.}
    \label{fig:ensemble_accuracy}
\end{figure}


Firstly, we note that restrictive ensembles differ from permissive ensembles only in the proportion of questions for which they reach a consensus, rather than at the level of single-agent responses. From this, we arrive at the following: 

\begin{restatable}[Accuracy maximization under permissive voting]{thm}{accmax}
\label{thm:accmax}
Accuracy is monotonic decreasing with respect to $k: P_C(k+1) \leq P_C(k) \ \forall \ k\in[1,...,n-1]$.
\end{restatable}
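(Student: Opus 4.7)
The plan is to observe that raising the voting threshold from $k$ to $k+1$ can only shrink the event that the ensemble reaches a correct consensus, and never enlarges it. Concretely, define $E_k := \{X_C \geq k \text{ and } X_C > X_I\}$. Since the dominance condition $X_C > X_I$ does not depend on $k$, any realization of the agent responses lying in $E_{k+1}$ automatically satisfies $X_C \geq k+1 \geq k$ together with $X_C > X_I$, so $E_{k+1} \subseteq E_k$. By monotonicity of the underlying probability measure, $P_C(k+1, n) = \Pr(E_{k+1}) \leq \Pr(E_k) = P_C(k, n)$, which is the desired inequality.

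To make the argument self-contained with respect to the multinomial formula given in the preliminaries, I would also verify the inequality directly from the closed-form expression. The double sum defining $P_C(k, n)$ ranges over pairs $(c, i)$ with $c \geq k$ and $0 \leq i \leq \min(c-1,\, n-c)$; the analogous sum for $P_C(k+1, n)$ drops exactly the slice $c = k$. Hence
\[
P_C(k, n) - P_C(k+1, n) \;=\; \sum_{i=0}^{\min(k-1,\, n-k)} \binom{n}{k,\, i,\, n-k-i}\bigl((1-\eta)(1-\delta)\bigr)^k \bigl((1-\eta)\delta\bigr)^i \eta^{n-k-i},
\]
which is a sum of non-negative probability masses and is therefore itself non-negative. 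The two approaches agree because the dropped term-slice is exactly the probability mass of the realizations $\{X_C = k,\ X_I < k\}$, i.e.\ the configurations that count as correct consensus at threshold $k$ but are reclassified upon raising the threshold.

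There is no serious obstacle here: monotonicity is essentially a structural consequence of defining ``consensus around $C$'' by the threshold condition $X_C \geq k$ conjoined with the threshold-independent dominance condition $X_C > X_I$. The only subtlety worth flagging is the tie convention. Ties ($X_C = X_I$) are classified as $NC$ uniformly across all $k$, so raising the threshold never converts a tie into a correct consensus; likewise, the strict dominance requirement ensures that a configuration with $X_C = k$ and $X_I < k$ is reassigned only to $NC$ (never to $I$) when $k$ increments. Both observations reinforce that the only movement under $k \mapsto k+1$ is from $C$ toward $NC$, yielding the weak inequality as claimed.
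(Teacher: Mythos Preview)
Your proposal is correct; the second argument (peeling off the $c=k$ slice from the multinomial sum and noting it is a sum of non-negative terms) is exactly the paper's proof. The event-inclusion framing $E_{k+1}\subseteq E_k$ is simply a cleaner conceptual packaging of the same observation.
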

Increasing the threshold $k$ of a voting ensemble can only maintain or decrease the proportion of questions that are answered correctly. Therefore, the optimal ensemble for maximizing accuracy is always the most permissive, all else being equal. Using almost identical logic, we show that ensemble yield, which counts both correct and incorrect answers, follows a similar pattern.

\begin{restatable}[Yield maximization under permissive voting]{thm}{ymax}
\label{thm:ymax}
Yield is monotonic decreasing with respect to $k$: $Y(k) \leq Y(k+1) \ \forall \ k\in[1,..,n-1]$.
\end{restatable}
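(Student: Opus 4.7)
The plan is to mirror the monotonicity argument behind Theorem \ref{thm:accmax} by observing that yield depends on $k$ only through the decision rule applied to a fixed distribution of agent responses. Fix $n$ and view the vote counts $(X_C, X_I)$ as random variables on a common sample space (the joint outcome of $n$ independent agent draws) that does not depend on $k$. The ``yield'' event at threshold $k$ is
\[
E(k) \;=\; \{X_C \geq k,\ X_C > X_I\} \;\cup\; \{X_I \geq k,\ X_I > X_C\},
\]
so $Y(k) = \Pr[E(k)] = P_C(k) + P_I(k)$.

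First I would verify the pointwise event inclusion $E(k+1) \subseteq E(k)$: any $(X_C, X_I) \in E(k+1)$ satisfies either $X_C \geq k+1 > k$ with $X_C > X_I$, or the analogous condition for $I$, so it automatically lies in $E(k)$. Monotonicity of probability over this containment then yields $Y(k+1) \leq Y(k)$, which is the ``monotonic decreasing'' conclusion advertised in the theorem (the stated inequality $Y(k) \leq Y(k+1)$ appears to be a transcription slip relative to the accompanying wording and to the ``permissive voting'' framing of the title).

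The argument has essentially no obstacle once both thresholds are evaluated on the same sample space; the only subtlety is that a tie ($X_C = X_I$) is a no-consensus at every threshold, and raising $k$ can only push a previously qualifying strict majority below the vote floor, never turn it into a tie. So the containment above holds uniformly. Equivalently one can decompose $Y = P_C + P_I$, invoke Theorem \ref{thm:accmax} directly for $P_C$, and apply the identical coupling argument to the specious-answer event to handle $P_I$, producing the same conclusion by summing the two monotone components.
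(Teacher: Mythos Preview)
Your proposal is correct and aligns with the paper's argument. The paper's proof is exactly your final sentence: write $Y(k)=P_C(k)+P_I(k)$, invoke Theorem~\ref{thm:accmax} for $P_C$, and note that the identical argument applied to $P_I$ gives its monotonicity as well; your primary event-inclusion presentation is a slightly cleaner packaging of the same idea, and your observation about the reversed inequality in the displayed statement is also accurate.
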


Proofs are provided in Appendix section \ref{restrict}. 


Next, we show that for very large ensembles, the maximum achievable accuracy for any given question is fixed by the question deceptiveness $\delta$.
\begin{restatable}[Maximal accuracy in the large-ensemble limit]{thm}{acclim}
\label{thm:acclim}
\begin{equation*}
    \lim_{n \to \infty} P_c(k_{opt}, n) = 
    \begin{cases}
        1, &\delta < 0.5 \\
        0.5, &\delta = 0.5 \\
        0, &\delta > 0.5
    \end{cases}
\end{equation*}
where $k_{opt}$ is the choice of threshold $k$ that maximizes accuracy. 
\end{restatable}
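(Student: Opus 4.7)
The plan is to first reduce the problem to plurality voting using Theorem~\ref{thm:accmax}, then apply the weak law of large numbers for the two bulk cases and a symmetry plus local-limit-theorem argument for the boundary case $\delta = 0.5$.

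By Theorem~\ref{thm:accmax}, $P_C(k,n)$ is non-increasing in $k$, so for every $n$ the optimal threshold is $k_{opt}=1$; it therefore suffices to compute $\lim_{n\to\infty} P_C(1,n)$. Let $(X_C, X_I, X_R)$ be multinomially distributed over $n$ trials with cell probabilities $p_C = (1-\eta)(1-\delta)$, $p_I = (1-\eta)\delta$, and $p_R = \eta$. For $k=1$ the $C$-consensus event is $\{X_C \geq 1\} \cap \{X_C > X_I\}$, which collapses to $\{X_C > X_I\}$ since $X_C > X_I \geq 0$ already forces $X_C \geq 1$. Hence $P_C(1, n) = P(X_C > X_I)$.

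For $\delta \neq 0.5$ (and $\eta < 1$, so that $p_C$ and $p_I$ are not both zero), I would invoke the weak law of large numbers: $X_C/n \to p_C$ and $X_I/n \to p_I$ in probability, so the sign of $X_C - X_I$ concentrates on the sign of $p_C - p_I = (1-\eta)(1-2\delta)$. When $\delta < 0.5$ we have $p_C > p_I$ and $P(X_C > X_I) \to 1$; when $\delta > 0.5$ the same argument gives $P(X_C > X_I) \to 0$.

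The boundary case $\delta = 0.5$ is the main obstacle and requires a sharper argument. Here $p_C = p_I$, so the joint law of $(X_C, X_I)$ is invariant under the swap $(X_C, X_I) \leftrightarrow (X_I, X_C)$, giving $P(X_C > X_I) = P(X_I > X_C) = \tfrac{1}{2}\bigl(1 - P(X_C = X_I)\bigr)$. It then suffices to show the tie probability vanishes. Writing $X_C - X_I = \sum_{j=1}^n Y_j$ with $Y_j$ i.i.d.\ on $\{-1, 0, +1\}$ and $P(Y_j = \pm 1) = p_C > 0$, the $Y_j$ are mean-zero with variance $2p_C$, and a local central limit theorem for lattice random walks (or equivalently a direct Stirling estimate of the dominant multinomial term $\binom{n}{j,j,n-2j} p_C^{2j} (1-2p_C)^{n-2j}$ near $j \approx np_C$) yields $P(X_C - X_I = 0) = O(n^{-1/2}) \to 0$. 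This gives $P_C(1,n) \to 1/2$, and combining with the previous two cases produces the three claimed limits.
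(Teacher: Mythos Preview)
Your proposal is correct and follows essentially the same route as the paper: reduce to $k_{opt}=1$ via Theorem~\ref{thm:accmax}, use concentration of the multinomial (the paper's version of your WLLN step) for $\delta\neq 0.5$, and handle $\delta=0.5$ by symmetry plus a Stirling/local-limit estimate that the tie probability vanishes (the paper isolates this as Lemma~\ref{lemma:pnc}). Your reformulation $P_C(1,n)=P(X_C>X_I)$ streamlines the bookkeeping relative to the paper's explicit double sums, but the substance is the same.
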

Our first result establishes the size-agnostic, optimal voting threshold $k=1$. We can show that the probability of reaching a no-consensus outcome goes to zero for very large ensembles. The first way to obtain the no-consensus outcome is if every answer falls short of the voting threshold $k$. For $k=1$, the probability of this phenomenon is of course zero. The second way is if a tie occurs between the correct and specious responses. We show that this probability likewise goes to zero, by establishing an upper bound on the multinomial summand and applying the squeeze theorem.

We turn our attention now to the behavior of $P_C$ for large ensembles, and we construct three cases dependent on $\delta$. When$\delta=0.5$, the probabilities $P_C$ and $P_I$ are formulated identically, so, given the impossibility of a no-consensus outcome, both $P(C) = P(I) =0.5$. Furthermore, we know that for very large $n$, the PMFof the multinomial distribution of answers exhibits a single sharp mode at the intrinsic probability vector. For values of $\delta < 0.5$, the state vector represented by this mode is incompatible with achieving consensus on an incorrect answer, so $P_I$ goes to 0 and $P_C$ goes to 1. By symmetry, the opposite is true when $\delta > 0.5$. Proofs are provided in appendix section \ref{proofs}. Consequently, for a very large and maximally permissive ensemble, accuracy is perfect for those questions where $\delta < 0.5$. 


\section{Experimental results}


\subsection{Arithmetic}
As an initial experiment to assess ensembles of language model, we constructed synthetic datasets of arithmetic problems across a range of difficulties. Arithmetic is a common LLM test case\cite{zhang2024careful, imani2023mathprompter, qiu2024dissecting, yang2023gpt}, since the complexity of arithmetic problems is well-defined and can be roughly estimated. (In general, problems with more digits and/or operations are harder than those with fewer digits and/or operations). We constructed ensembles of independent instances of the publicly available model Llama3-70B-instruct, from Meta \cite{dubey2024llama}. Generation parameters were kept at the default values.

Two categories of arithmetic problems were used. The first is multi-digit multiplication problems, in which an expression is given in the form \textcolor{purple}{$a * b$}. The second is order-of-operations problems, in which operators sampled from $\left\{+,-,*\right\}$ and parentheses are concatenated. (An example expression would be \textcolor{purple}{$((8 + 4) * 3) + (4 * 6) - 9 - 6$}.) In both cases, the models were directed to follow the prompt: \texttt{“Evaluate the following arithmetic expression and output the final answer as a single number. Do not show your work. Do not include any extra text. \{\textcolor{purple}{expression}\} = ?”}. Only model responses containing a single integer number were included, to ensure that chain-of-thought generation did not affect ensemble results.

\begin{figure}[t]
    \centering
    \begin{subfigure}[h]{1\textwidth}
        \centering
        \includegraphics[width=1\linewidth]{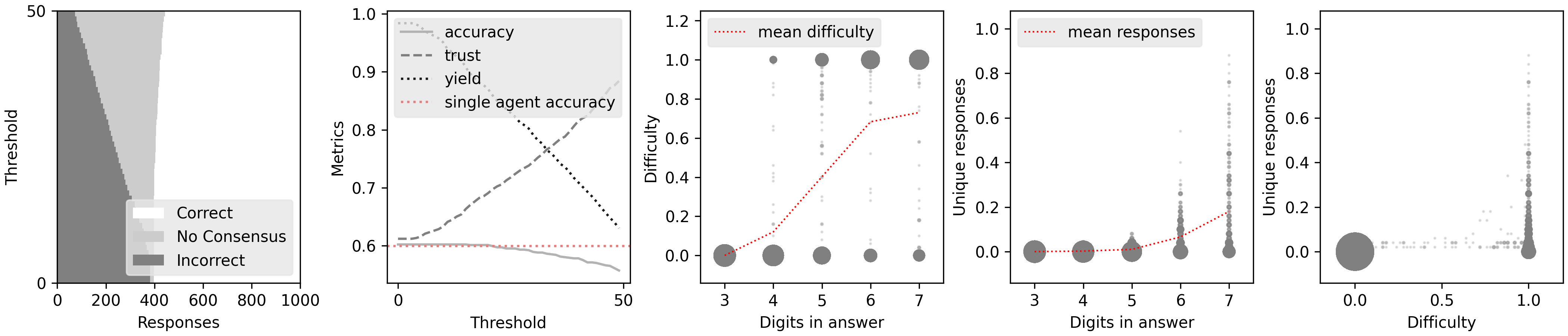}
        \caption{}
    \end{subfigure}
    \hfill
    \begin{subfigure}[h]{1\textwidth}
        \centering
        \includegraphics[width=1\linewidth]{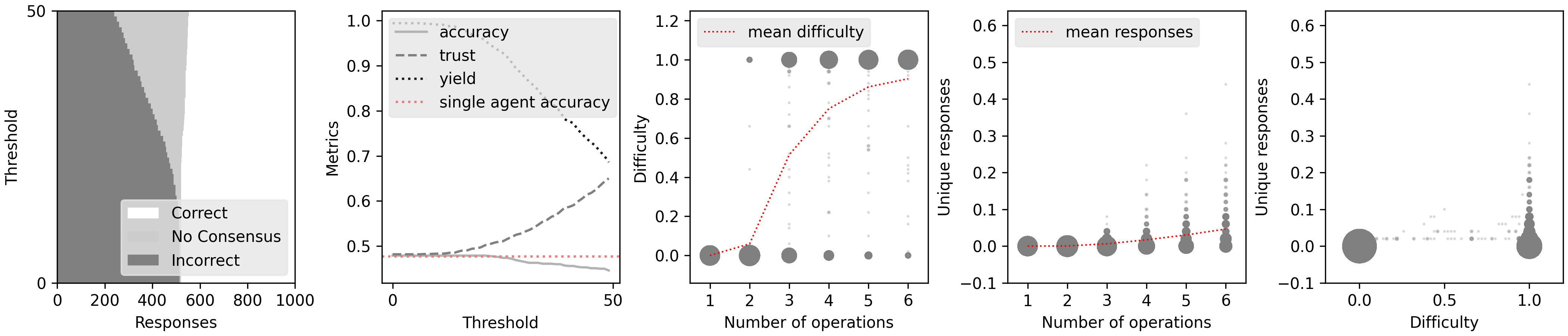}
        \caption{}
    \end{subfigure}
    \caption{(a) An ensemble of 50 models answers multiplication questions of varying difficulty. Notably, trust improves at high voting thresholds. (b) Similar behavior is observed when the ensemble evaluates arithmetic expressions with multiple operations.}
    \label{fig:arithmetic}
\end{figure}

\begin{figure}[h]
    \centering
    \begin{subfigure}[h]{1\textwidth}
        \centering
        \includegraphics[width=1\linewidth]{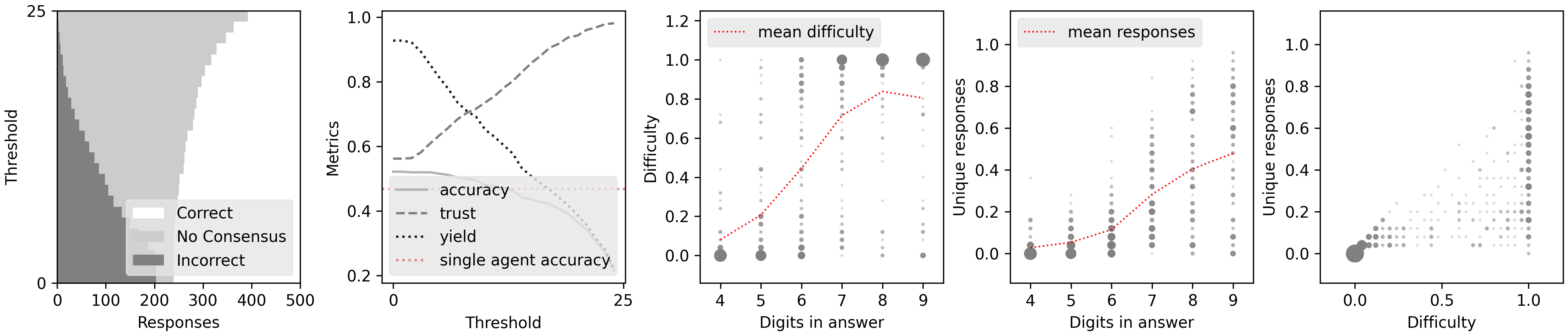}
        \caption{}
    \end{subfigure}
    \hfill
    \begin{subfigure}[h]{1\textwidth}
        \centering
        \includegraphics[width=1\linewidth]{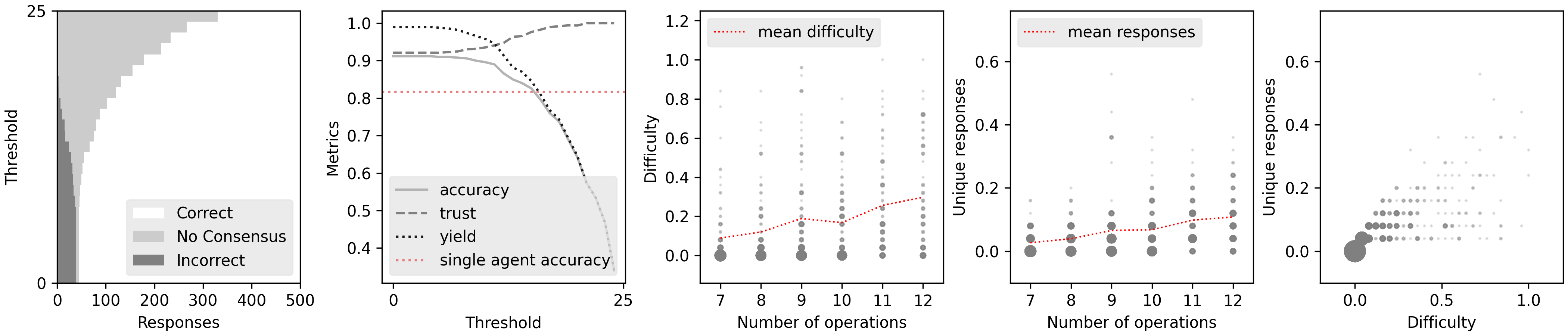}
        \caption{}
    \end{subfigure}
    \caption{An ensemble answers arithmetic questions using chain-of-thought prompting. Performance and question distributions are shown for questions involving (a) multiplication and (b) expressions with multiple operations.}
    \hfill
    \label{fig:cot}
\end{figure}

The results of both experiments show that ensembling is a valid strategy for increasing trust, but that it provides no benefit to accuracy over using a single model (Figure \ref{fig:arithmetic}). For each of the questions posed, we empirically estimated the difficulty $d$ as the fraction of incorrect answers. As expected, multiplication questions involving more digits and order-of-operations questions comprising more operations proved more difficult for the models to answer, and elicited a greater diversity of unique responses. Trust significantly increases with voting restrictiveness (from 0.61 to 0.88 for multiplication questions and from 0.48 to 0.65 for order-of-operations questions).

Next, we sought to understand whether these benefits hold when using a chain-of-thought (cot) prompting strategy, which is now understood to be a practical necessity for LLMs solving difficult problems. We performed two additional experiments on arithmetic questions to examine how cot affects the distribution of ensembled responses (figure \ref{fig:cot}). Predictably, cot does allow the model (Llama3-70B-instruct) to answer much harder questions. In previous experiments the model is explicitly prevented from showing work and achieves <50\% accuracy answering multiplication questions involving more than 4 digits or order-of-operations questions with more than 2 operations. When instructed to show work, the model achieves over 50\% accuracy on multiplication questions up to 6 digits and order of operations questions exceeding 12 operations. Apart from uniformly increasing model performance, cot seems to not significantly affect the general behavior of the ensemble. In particular, the benefits of an ensembling approach are still present: accuracy is increased through voting compared to a single model, and trust can be further improved through restrictive voting schemes.

\subsection{Clinical Reports}
\label{sec:clinical_reports}
To understand whether this behavior carries over to problems of practical interest, we tested our ensembles' ability to extract detailed clinical information from dense perioperative patient notes. Data consisted of 500 text reports collected by cardiac surgeons and cardiac anesthesiologists between 2017 and 2022 at both the Hospital of the University of Pennsylvania and the Pennsylvania Presbyterian Hospital. (An example excerpt is provided in the supplement section \ref{sec:example_report}.) For each report, we asked the Llama3-8B-instruct models to extract three notable preoperative patient characteristics: left ventricular ejection fraction, severity of mitral valve stenosis, and severity of mitral valve regurgitation. Model temperature was raised from a default value of 0.6 to 1, and all other generation parameters were kept at the default values. (A discussion of model temperature is given in Appendix section \ref{sec:temperature}.) Echocardiogram reports were manually parsed to obtain ground-truth labels.

\begin{figure}[h]
    \centering
    \begin{subfigure}[h]{1\textwidth}
        \centering
        \includegraphics[width=1\linewidth]{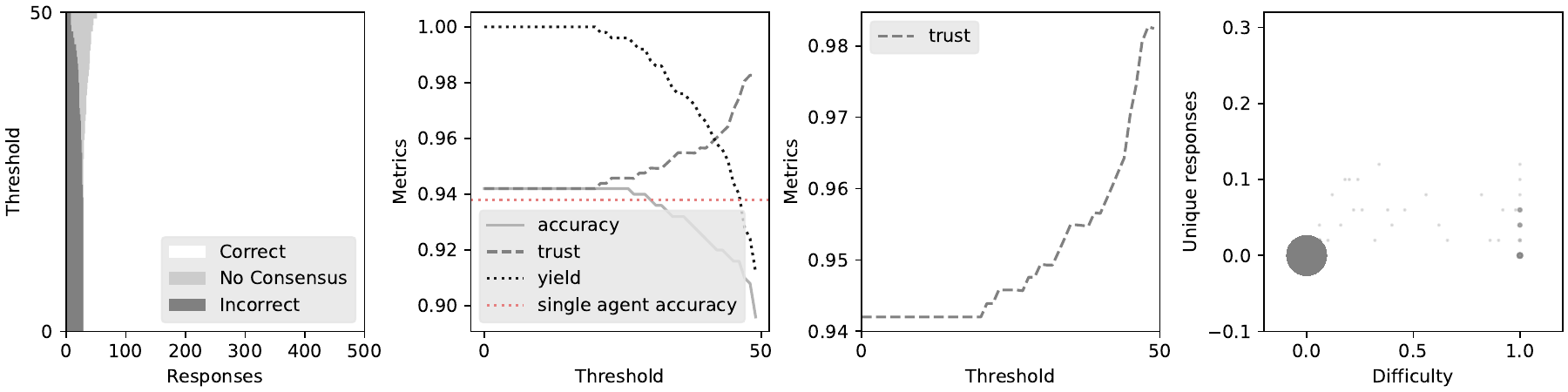}
        \caption{}
    \end{subfigure}
    \hfill
    \begin{subfigure}[h]{1\textwidth}
        \centering
        \includegraphics[width=1\linewidth]{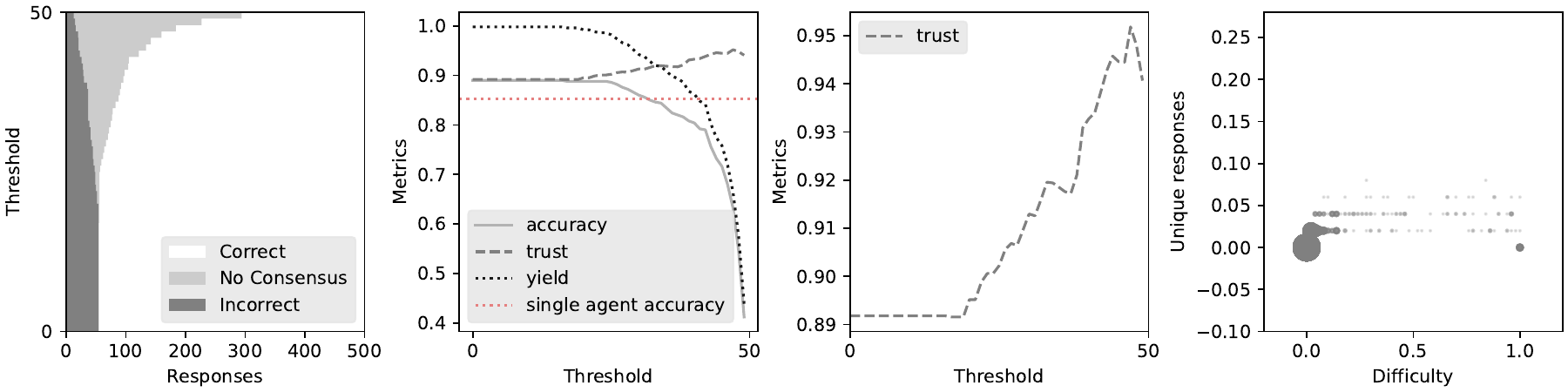}
        \caption{}
    \end{subfigure}
    \hfill
    \begin{subfigure}[h]{1\textwidth}
        \centering
        \includegraphics[width=1\linewidth]{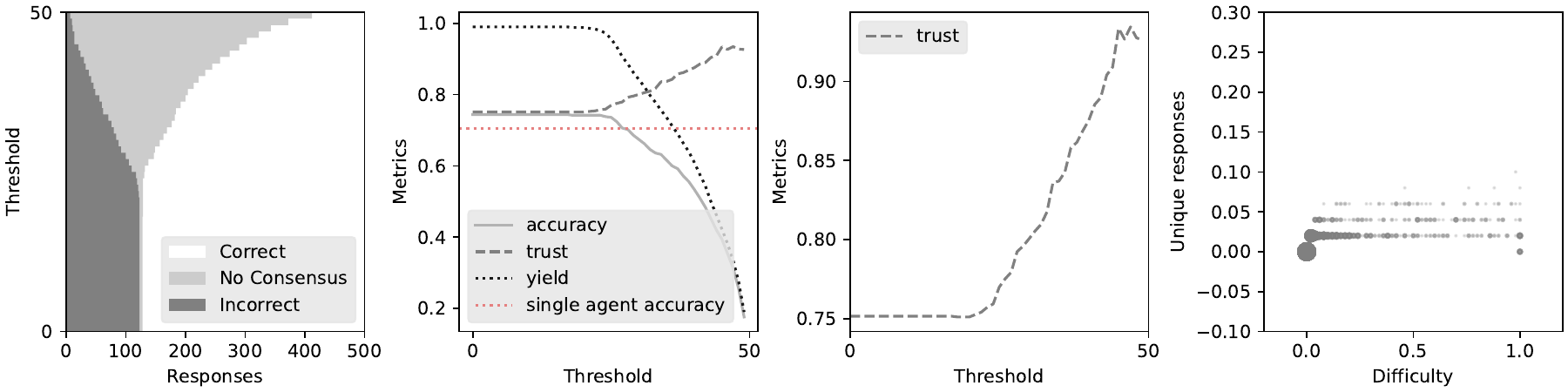}
        \caption{}
    \end{subfigure}
    \caption{A single model and ensemble  extract salient features from the text of echocardiogram reports. Accuracy, yield, and trust are shown as a function of voting threshold and question distributions for model-extracted (a) Left ventricular ejection fraction (LVEF), (b) mitral stenosis (MS), and (c) mitral regurgitation (MR).}
    \label{fig:clinical_results}
\end{figure}

As with the arithmetic  domain, ensembling produces multiple benefits. We see stable-to-marginally improved answer accuracy compared to that of a single model (LVEF: 0.94 to 0.94, MS: 0.85 to 0.89, MR: 0.70 to 0.74). Most notably, trust increases markedly across all three features as ensembles become more restrictive (LVEF: 0.94 to 0.98, MS: 0.85 to 0.94, MR: 0.70 to 0.93).


\section{Discussion}

The prospect of knowing when a model is or is not confident in its answer has inspired a broad array of work in uncertainty estimation for LLMs. While many approaches take a granular look within the network \cite{lindsey2025biology, ferrando2024know} or seek to break down the process of answering a single question into many more transparent steps \cite{radhakrishnan2023question}, our approach aligns generally with methods for estimating uncertainty based only on the output of the model forward pass. These include works aimed at computing informational entropy over a range of responses \cite{farquhar2024detecting, kuhn2023semantic} and for decomposing aleatoric and epistemic sources of ambiguity through ensembling \cite{hou2023decomposing}. Compared to these approaches, the much simpler ensembling approach presented here is less concerned with understanding the source of uncertainty and more concerned with providing tools to measure uncertainty and flexibly mitigate it.

Under our framework, simply counting the fraction of responses that are not either the correct response or the dominant (highest frequency) incorrect response is a straightforward and effective way to estimate the bewilderment $\eta$. Deceptiveness, $\delta$, is likewise easy to estimate as the fraction of dominant incorrect responses over the sum of both correct and dominant incorrect responses. Combined, these are sufficient to model the expected behavior of various perturbations to the ensemble, such as varying threshold or size, and to optimize for competing demands of reducing inference compute, maximizing accuracy, and maximizing trust in any individual answer.

\section{Conclusions, limitations, and future work}
In this work we introduce a theoretical framework for question answering and explore its implications for the behavior of multi-agent ensembles. Our analysis elucidates theoretical bounds on the utility benefits of ensemble voting. Specifically, we outline types of questions and question domains in which improvements in accuracy can be achieved, chiefly at the cost of compute, and in which improvements to trust can be achieved at the cost of response yield. These results are further demonstrated in the case of solving arithmetic problems and answering clinical questions using LLMs, where we show that implementing highly restrictive voting schemes can vastly increase trust and reduce hallucinations in the ensemble's predictions. While detecting hallucinations is one obvious practical application of this work, higher thresholds necessarily decrease yield, so a determination must be made by the user as to what an acceptable hallucination rate is given a specific task. Additionally, as a simplistic yet rigorous method for trading test time compute for additional model performance, ensemble voting could be a useful methodological baseline for similar methods aimed at improving the performance of pretrained models.

\section*{Acknowledgments}
This work was supported by the NIH National Heart, Lung, and Blood Institute under award number R01-HL163202 and the National Science Foundation under award number DGE-2236662.


\bibliographystyle{unsrt}  
\bibliography{references}  


\appendix

\section{Proofs}\label{proofs}

\subsection{Restrictive voting reduces ensemble accuracy and yield}\label{restrict}

\accmax*

\begin{proof}
It is clear that for $k\in[1..n-1]$, 

\begin{align*}
    P_C(k) &= \sum_{c=k}^{n}\sum_{i=0}^{\min(c-1,n-c)} \frac{n!}{c!n!(n-c-i)!} \left((1-\eta)(1-\delta)\right)^c\left((1-\eta)\delta\right)^i\eta^{n-c-i} \\
    &= \sum_{c=k+1}^{n}\sum_{i=0}^{\min(c-1,n-c)} \frac{n!}{c!n!(n-c-i)!} \left((1-\eta)(1-\delta)\right)^c\left((1-\eta)\delta\right)^i\eta^{n-c-i} \\ 
    &+ \sum_{i=0}^{\min(k-1,n-k)} \frac{n!}{k!n!(n-k-i)!} \left((1-\eta)(1-\delta)\right)^k\left((1-\eta)\delta\right)^i\eta^{n-k-i} \\
    &= P_C(k+1) + \sum_{i=0}^{\min(k-1,n-k)} \frac{n!}{k!n!(n-k-i)!} \left((1-\eta)(1-\delta)\right)^k\left((1-\eta)\delta\right)^i\eta^{n-k-i}
\end{align*} 

Let the remaining series on the RHS be $s$; clearly, the summand is strictly non-negative, so $s \geq 0$. 
Then $P_C(k) \geq P_C(k+1) + (0) = P_C(k+1)$. 
\end{proof}

\ymax*
\begin{proof}
$Y(k) = P_C(k) + P_I(k)$. By \cref{thm:accmax}, $P_C(k)$ is monotonic decreasing. We can use identical logical to show that $P_I(k)$, which takes analogous form to $P_C(k)$, is likewise monotonic decreasing. Therefore $Y(k)$ is monotonic decreasing.
\end{proof}

\subsection{Maximal accuracy of large ensembles is solely governed by question deceptiveness}\label{maximal}

\begin{lemma}[No no-consensus in the large-ensemble limit]
\label{lemma:pnc}
When $k = 1, \ \lim_{n \to \infty} P_{NC}(k, n) = 0 \ \forall \ \delta \in [0,1], \ \eta \in [0,1)$.
\end{lemma}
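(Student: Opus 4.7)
The natural first step is to split
\[
  P_{NC}(1,n) \;=\; \eta^n \;+\; T_n, \qquad T_n \;:=\; \sum_{\beta=1}^{\lfloor n/2\rfloor}\binom{n}{\beta,\beta,n-2\beta}\,p_C^\beta\,p_I^\beta\,\eta^{n-2\beta},
\]
where $p_C=(1-\eta)(1-\delta)$ and $p_I=(1-\eta)\delta$: at $k=1$ the first double sum in the definition of $P_{NC}(k,n)$ collapses to the single $c=i=0$ term $\eta^n$, which vanishes because $\eta<1$. The substantive task is then to show that the ``tie'' sum $T_n$ also tends to $0$.

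For the per-summand bound called for by the squeeze-theorem strategy, I would combine AM--GM on $(p_C p_I)^\beta$ with a Stirling bound on the central binomial. Since $p_C p_I\le((p_C+p_I)/2)^2=((1-\eta)/2)^2$, and $\binom{n}{\beta,\beta,n-2\beta}=\binom{n}{2\beta}\binom{2\beta}{\beta}$ with $\binom{2\beta}{\beta}\le 4^\beta/\sqrt{\pi\beta}$ for $\beta\ge1$, the factors of $4$ cancel and each summand is at most
\[
  \frac{1}{\sqrt{\pi\beta}}\,\binom{n}{2\beta}(1-\eta)^{2\beta}\eta^{n-2\beta} \;=\; \frac{P(Z=2\beta)}{\sqrt{\pi\beta}},
\]
where $Z\sim\mathrm{Binomial}(n,1-\eta)$ has mean $\mu=n(1-\eta)$.

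I would then split the resulting sum at $\beta_0:=\mu/4$. For $\beta\ge\beta_0$ the factor $1/\sqrt{\pi\beta}$ is uniformly $O(1/\sqrt{n})$, and since $\sum_\beta P(Z=2\beta)\le 1$ this piece contributes $O(1/\sqrt{n})$. For $\beta<\beta_0$ we have $2\beta<\mu/2$, so the piece is bounded by $P(Z<\mu/2)$, which is exponentially small in $n$ by a standard Chernoff bound (valid because $\eta<1$ keeps $\mu$ linear in $n$). Both pieces vanish, so $T_n\to0$, and the sandwich $0\le P_{NC}(1,n)\le\eta^n+T_n$ closes the argument.

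The main obstacle is the symmetric case $\delta=1/2$: there $E[X_C-X_I]=0$ and a tie is in fact the single most likely outcome, so one cannot hope for exponential decay of $T_n$---only the $n^{-1/2}$ rate predicted by the CLT. The $1/\sqrt{\pi\beta}$ factor gained from Stirling's bound on the central binomial is exactly what captures this rate without invoking a local CLT; the coarser bound $\binom{2\beta}{\beta}\le 4^\beta$ alone gives only $T_n\le\tfrac12(1+(2\eta-1)^n)\to\tfrac12$, which is insufficient.
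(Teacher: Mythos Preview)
Your argument is correct and shares the paper's overall strategy---reduce to the tie sum, bound $p_Cp_I\le\bigl((1-\eta)/2\bigr)^2$ via AM--GM, apply a Stirling-type estimate, and squeeze---but the execution differs in a way that matters. The paper first maximises the $\eta$-dependent factor over $\eta$ (taking $\eta=1-2b$ with $b=\beta/n$) and applies Stirling to the full multinomial coefficient, obtaining a per-term bound $\tfrac{1}{2\pi bn\sqrt{1-2b}}$; it then swaps $\lim_{n\to\infty}$ with the sum over $b$, but since the index set itself has $\lfloor n/2\rfloor$ terms this interchange needs more care than the paper gives (rewritten in $\beta$, the summed bound behaves like $\sum_{\beta\ge1}\tfrac{1}{2\pi\beta\sqrt{1-2\beta/n}}$, which grows logarithmically). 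Your factorisation $\binom{n}{\beta,\beta,n-2\beta}=\binom{n}{2\beta}\binom{2\beta}{\beta}$, with Stirling applied only to the central binomial, avoids this entirely: the remaining factor is a genuine $\mathrm{Binomial}(n,1-\eta)$ point mass whose total is at most $1$, and your Chernoff/$\sqrt{\beta}$ split then delivers an honest $O(n^{-1/2})$ rate without any limit interchange. You also correctly retain the $c=i=0$ contribution $\eta^n$ (the paper writes it as $0$), though since $\eta<1$ this is immaterial to the conclusion.
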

\begin{proof}
Recall that 
\begin{align*} 
P_{NC}(k, n) &= \sum_{c=0}^{k-1}\sum_{i=0}^{k-1} \frac{n!}{c!i!(n-c-i)!} \left((1-\eta)(1-\delta)\right)^c\left((1-\eta)\delta\right)^i\eta^{n-c-i} \\ 
&+ \sum_{\beta=k}^{fl(n/2)}\frac{n!}{\beta!\beta!(n-2\beta)!} \left((1-\eta)(1-\delta)\right)^\beta\left((1-\eta)\delta\right)^\beta\eta^{n-2\beta} \\ 
P_{NC}(1, n) &= (0) + \sum_{\beta=1}^{fl(n/2)}\frac{n!}{\beta!\beta!(n-2\beta)!} \left((1-\eta)(1-\delta)\right)^\beta\left((1-\eta)\delta\right)^\beta\eta^{n-2\beta}\\
      &= \sum_{\beta=1}^{fl(n/2)}\frac{n!}{\beta!\beta!(n-2\beta)!} \left((1-\eta)^2(1-\delta)\delta\right)^\beta\eta^{n-2\beta}
\end{align*}

Now let $\beta := bn$, where $b = \{b_1, b_2, ..., b_{fl(n/2)}\}, b_i \in (0, \frac{1}{2}] \ \forall \ i$, such that

\begin{align*}
P_{NC}(1,n) &= \sum_b \frac{n!}{(bn)!(bn)!(n-2bn)!} \left((1-\eta)^2(1-\delta)\delta\right)^{bn}\eta^{n-2bn}.\\
\end{align*}

We notice that $(1-\delta)(\delta) \leq \frac{1}{4}$. Furthermore, we can show that $(1-\eta)^{2bn}\eta^{n-2bn}$ is maximized when $n = 0, \ \eta = 1, \text{or} \ \eta = 1-2b$. The first 2 maximizing conditions are inherently disallowed, and thus the summand

\begin{align*}
\frac{n!}{(bn)!(bn)!(n-2bn)!} \left((1-\eta)^2(1-\delta)\delta\right)^{bn}\eta^{n-2bn} &\leq 
\frac{n!}{(bn)!(bn)!(n-2bn)!} \left((2b)^2\frac{1}{4}\right)^{bn}(1-2b)^{n-2bn} \\
&= \frac{n!}{(bn)!(bn)!(n-2bn)!} (b^2)^{bn}(1-2b)^{n-2bn} \\
&= \frac{n!}{(bn)!(bn)!(n-2bn)!} \left(b^{2b}(1-2b)^{1-2b}\right)^n \\
\text{Applying the Stirling approximation for factorials:} \\
&\approx \frac{\sqrt{2\pi n}\left(\frac{n}{e}\right)^n}{\left(\sqrt{2\pi bn}\left(\frac{bn}{e}\right)^{bn}\right)^2 \sqrt{2\pi (1-2b)n}\left(\frac{(1-2b)n}{e}\right)^{(1-2b)n} }  \left(b^{2b}(1-2b)^{1-2b}\right)^n \\
&= \frac{\left((1-2b)^{2b-1}b^{-2b}\right)^n}{2 \pi bn (1-2b)^{\frac{1}{2}}} \left(b^{2b}(1-2b)^{1-2b}\right)^n  \\
&= \frac{1}{2\pi b n (1-2b)^\frac{1}{2}} \\
\end{align*}

Then
\begin{align*}
\ P_{NC}(1, n) &\leq \sum_b \frac{1}{2\pi b n (1-2b)^\frac{1}{2}} \\ 
\lim_{n \rightarrow \infty} P_{NC}(1, n) &\leq \lim_{n \rightarrow \infty} \sum_b \frac{1}{2\pi b n (1-2b)^\frac{1}{2}} \\ 
&= \sum_b \lim_{n \rightarrow \infty} \frac{1}{2\pi b n (1-2b)^\frac{1}{2}} \\
&= \sum_b 0 \\
&= 0
\end{align*}

(We can shift the limit inside of the RHS sum because the upper-bounding series is clearly uniformly convergent.) But $P(NC)$ is strictly non-negative, as it represents a probability. So $0 \leq \lim_{n \rightarrow \infty} P(NC) \leq 0$, and thus by the squeeze theorem, $\lim_{n \rightarrow \infty} P(NC) = 0$. 
\end{proof}

\acclim*

\begin{proof}
From \cref{thm:accmax}, accuracy is maximized for $k  = k_{opt} = 1$.

We begin with the special case of $\delta = 0.5$ and show that $\lim_{n \rightarrow \infty} P_{C}(1, n) = 0.5$.  

When $\delta=0.5$,
\begin{align*}
P_C(1, n) &= \sum_{c=k}^{n}\sum_{i=0}^{\min(c-1,n-c)} \frac{n!}{c!i!(n-c-i)!} \left((1-\eta)(1-\frac{1}{2})\right)^{c}\left((1-\eta)\frac{1}{2}\right)^{i}\eta^{n-c-i}\\
     &= \sum_{c=k}^{n}\sum_{i=0}^{\min(c-1,n-c)} \frac{n!}{c!i!(n-c-i)!} \left(\frac{1-\eta}{2}\right)^{c+i}\eta^{n-c-i}\\
P_I(1,n) &= \sum_{i=k}^{n}\sum_{c=0}^{\min(i-1,n-i)} \frac{n!}{c!i!(n-c-i)!} \left((1-\eta)(1-\frac{1}{2})\right)^{c}\left((1-\eta)\frac{1}{2}\right)^{i}\eta^{n-c-i}\\
     &= \sum_{i=k}^{n}\sum_{c=0}^{\min(i-1,n-i)} \frac{n!}{c!i!(n-c-i)!} \left(\frac{1-\eta}{2}\right)^{c+i}\eta^{n-c-i}
\end{align*}
Let $\alpha = c + i$. Then
\begin{align*}
P_C(1, n) &= \sum_{c=k}^{n}\sum_{\alpha=c}^{\min(2c-1,n)} \frac{n!}{c!(\alpha-c)!(n-\alpha)!} \left(\frac{1-\eta}{2}\right)^{\alpha}\eta^{n-\alpha}\\
P_I(1, n) &= \sum_{i=k}^{n}\sum_{\alpha=i}^{\min(2i-1,n)} \frac{n!}{(\alpha-i)!i!(n-\alpha)!} \left(\frac{1-\eta}{2}\right)^{\alpha}\eta^{n-\alpha}\\
\end{align*}

Since the two series are only differentiated by the choice of indexing variable, they must be identical; that is $P_C(1,n) = P_I(1,n)$. 

$P(C) + P(I) + P(NC) = 1$ by definition, and from \cref{lemma:pnc}, $\lim_{n \rightarrow \infty} P_{NC}(1, n) = 0$. So we must have that 

\begin{align*}
    \lim_{n \rightarrow \infty} \left(P_C(1,n) + P_I(1,n) + P_{NC}(1,n) \right) &= \lim_{n \rightarrow \infty} 2P_C(1,n)
    + \lim_{n \rightarrow \infty} P_{NC}(1,n) \\
    &= 2 \left(\lim_{n \rightarrow \infty} P_C(1,n)\right) + (0) \\
    &= 1
\end{align*}

And thus $\lim_{n \rightarrow \infty} P_C(1,n) = 0.5$.

We now move to the more common case, in which $\delta \neq 0.5$. 



Consider only the summands of $P(C), P(I)$, which sample the multinomial distribution:

\begin{equation*}
    f(X | n, p) = \frac{n!}{\alpha!\beta!\gamma!} \left((1-\eta)(1-\delta)\right)^\alpha \left((1-\eta)\delta\right)^\beta \eta^{\gamma}.
\end{equation*}

Here, $X$ is the state vector $[\alpha, \beta, \gamma]$ (where $\gamma = n-\alpha-\beta$), and $p$ is the corresponding probability vector $[(1-\eta)(1-\delta), (1-\eta)\delta, \eta]$. \\

It is well-known that as $n \rightarrow \infty$, the probability mass of the multinomial distribution concentrates in the immediate neighborhood of $p$, such that $X \rightarrow \mathbb{E}[X] = np$. \\

Clearly $\lim_{n\rightarrow \infty} f_C =  
    \lim_{n\rightarrow \infty} f_I$.


When $\delta < 0.5$, we have that $1 - \delta > \delta$. Thus 

\begin{align}
\lim_{n \rightarrow \infty} c &= n(1-\eta)(1-\delta) \nonumber \\
&> n(1-\eta)\delta \nonumber \\
&= \lim_{n\rightarrow \infty} i 
\end{align}
 
Invoking the definitions of $P_I(1, n)$ and $f_I$: 

\begin{align*}
\lim_{n \rightarrow \infty} P_I(1, n) & = \lim_{n \rightarrow \infty} \sum_{i=k}^{n}\sum_{c=0}^{\min(i-1,n-i)} f(X = [c, i, n-c-i] | n, p) \\
&= 
\sum_{i=k}^{n}\sum_{c=0}^{\min(i-1,n-i)} \lim_{n \rightarrow \infty} f(X = [c, i, n-c-i] | n, p) &\\
\end{align*}

assuming uniform convergence of $P_I(1,n)$. \\

By (1), $\lim_{n\rightarrow\infty} c > \lim_{n\rightarrow\infty} i$. However the bounds on $P_I(1,n)$ restrict our sum to those terms where $i > c$. Thus $P_I(1,n)$ vanishes. As $P_C(1,n) + P_I(1,n) + P_{NC}(1,n) = 1$, $\lim_{n \rightarrow \infty} P(C) = 1$ for $ \delta < 0.5$. \\

When $\delta > 0.5$, we have, inversely,  that $n(1-\eta)\delta > n(1-\eta)(1-\delta)$. It follows by a symmetrical argument to the above that $\lim_{n \rightarrow \infty} P_C(1,n) = 0$ and $\lim_{n \rightarrow \infty} P_I(1,n) = 1$.

\end{proof}

\section{Example echocardiography report}
\label{sec:example_report}

We give an example of a typical echocardiography report of the type analyzed in section \ref{sec:clinical_reports}. For these reports, a language model was used to identify and extract in tabular form several features of interest, including mitral regurgitation (red), mitral stenosis (green), and left ventricular ejection fraction (blue). This exercise resembles the common task of parsing large and loosely formatted electronic health record information in order to guide medical interventions or to perform epidemiological analyses.

\texttt{75 yo M. Pre-intervention support: EPI AV: The aortic valve is tricuspid. It is mildly calcified with severe stenosis and mod to severe AR. MV: Valve exhibits excessive mobility, prolapsing P2 segment with \textcolor{red}{mod-sev regurgitation}. Mean and peak velocity of 19 and 7 mmHg \textcolor{green}{mildly stenotic}. LV: The left ventricle has low normal global systolic function with an \textcolor{blue}{ejection fraction in the range of 40-45\%} and normal chamber size. No ventricular hypertrophy, no regional wall motion abnormalities. RV: The right ventricle has decreased systolic function and normal chamber size. TV: the tricuspid valve has trace regurgitation. LA: The left atrium is normal in size and no thrombus noted in atrial appendage. RA: The right atrium is mildly dilated and pulmonary artery catheter visualized in chamber. IAS: No PFO is noted by Color Flow Doppler. Ao:  The aorta is intact and normal in size: there is moderate to severe atherosclerotic disease. PV: The pulmonic valve has trace regurgitation. Pcard: No significant effusion was noted. Pleura: No significant pleural effusion was noted.}

\section{Tie breakers}
\label{sec:tiebreakers}
In the framework above, we largely gloss over the question of ties, which can occur for threshold values $k<\frac{n}{2}$. In theorem \ref{}, we show that the probability of a tie goes to zero as ensemble size increases. Nonetheless, ties do occur at practical ensemble sizes of 10-50 agents and present a potential dilemma. One option, which we outline above, is to shunt questions with a tied number of responses to the no-consensus category. A second valid approach would be to conduct a 'tie breaker' vote, in which additional agents are queried until the tie can be resolved. In practice, it may not be easy to simply run an unbounded number of additional queries to a language model. (An only slightly inferior method for breaking ties is to choose one of the predominant answers at random.) Whether the former strategy or the latter is more suitable depends on the use case. Generally, for problem domains in which questions have only a singular correct answer and the expected accuracy exceeds 50\%, it is advantageous to use the former.

\section{Temperature}
\label{sec:temperature}
The temperature parameter in LLM inference modifies the token-wise probabilities predicted by a model. A low temperature biases outputs to only the highest probability tokens, while a high temperature dilutes the sampling probability over a larger range of values. Ensembling approaches clearly rely on a temperature > 0 to be effective, but it is not clear what temperature is optimal, or how sensitive these approaches are to temperature. We find that, while increasing or decreasing model temperature affects the accuracy and variability of responses within an ensemble, it rarely has an effect on the final answer of the ensemble. Intuitively, we can understand temperature as modulating question $\eta$ while leaving question $\delta$ the same. This means that, in certain problem domains where the $\eta$ of questions is already very low, increasing model temperature is an effective way to reduce the size of the ensemble necessary to disambiguate between questions with a near-zero $\eta$ and questions with a merely low $\eta$.

\begin{figure}[b]
    \centering
    \begin{subfigure}[h]{1\textwidth}
        \centering
        \includegraphics[width=1\linewidth]{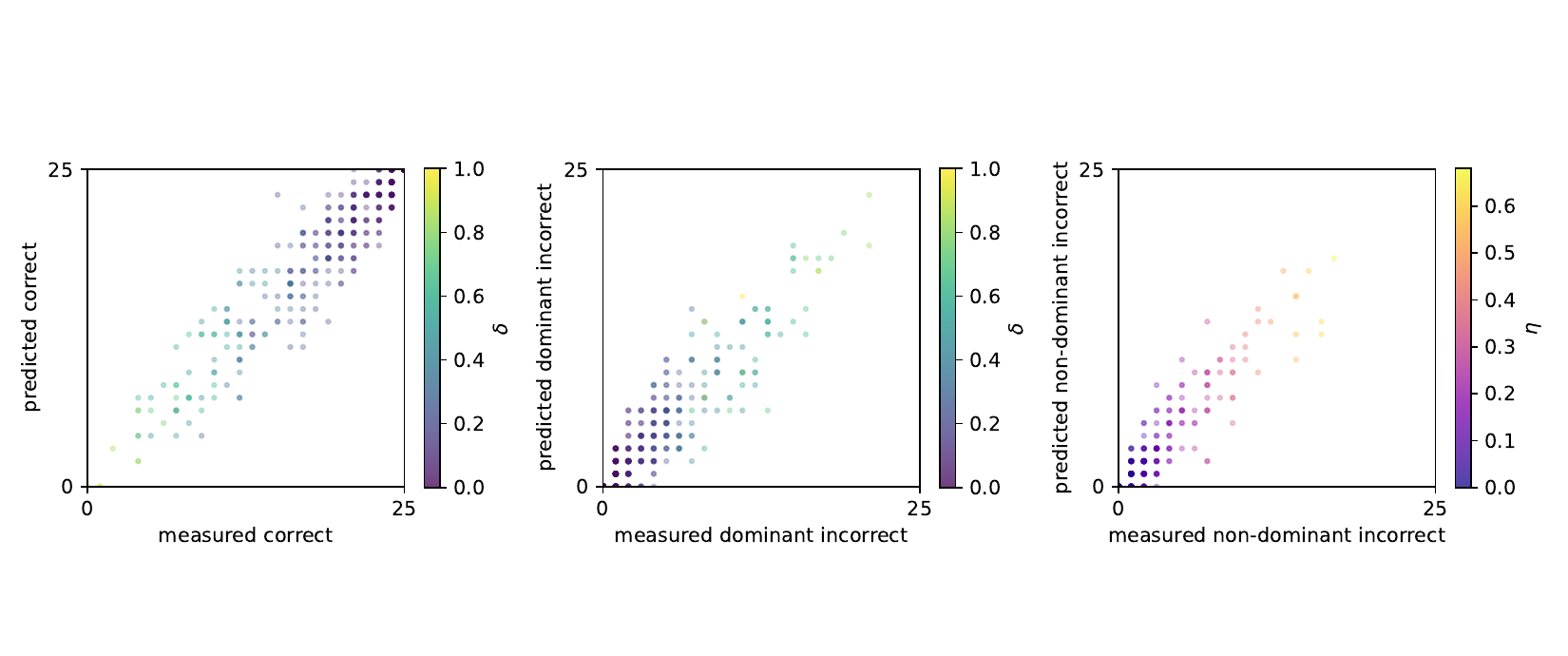}
        \caption{}
    \end{subfigure}
    \hfill
    \begin{subfigure}[h]{0.49\textwidth}
        \includegraphics[width=1\linewidth]{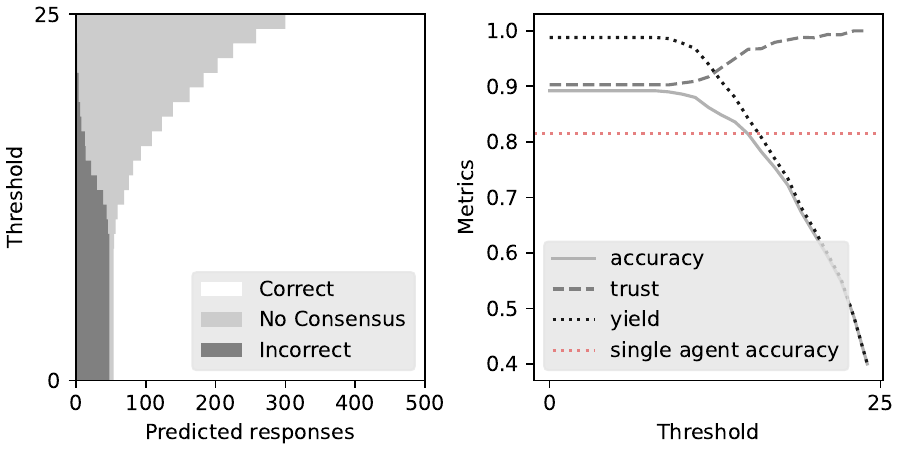}
        \caption{}
    \end{subfigure}
    \begin{subfigure}[h]{0.49\textwidth}
        \includegraphics[width=1\linewidth]{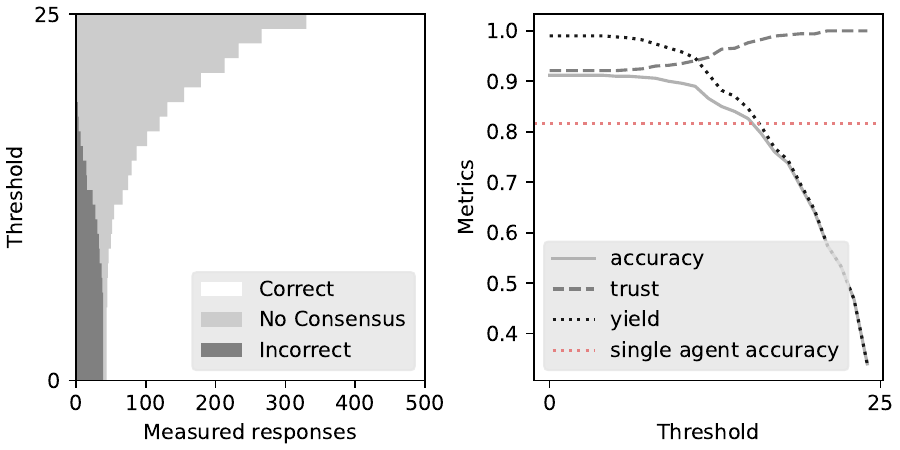}
        \caption{}
    \end{subfigure}
    \caption{Question parameters $\delta$ and $\eta$ estimated empirically over one arithmetic question domain (Figure \ref{fig:cot}b). (A) Simulated ensembles using the estimated $\delta$ and $\eta$ predict ensemble response distributions that are highly correlated with the measured distributions. Once evaluated, predicted metric performance (B) closely resembles that of the real ensemble (C).}
    \hfill
    \label{fig:empirical_delta_eta}
\end{figure}

\begin{figure}
    \centering
    \begin{subfigure}[h]{0.4\textwidth}
        \centering
        \includegraphics[width=1\linewidth]{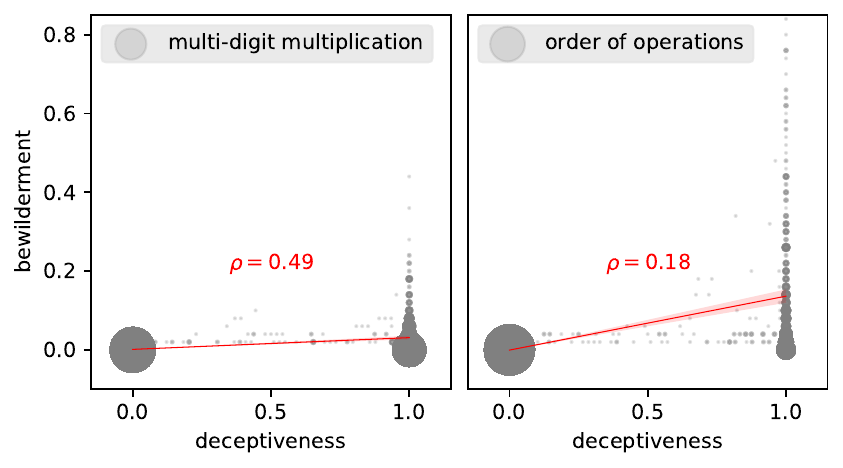}
    \end{subfigure}
    \hfill
    \begin{subfigure}[h]{0.58\textwidth}
        \centering
        \includegraphics[width=1\linewidth]{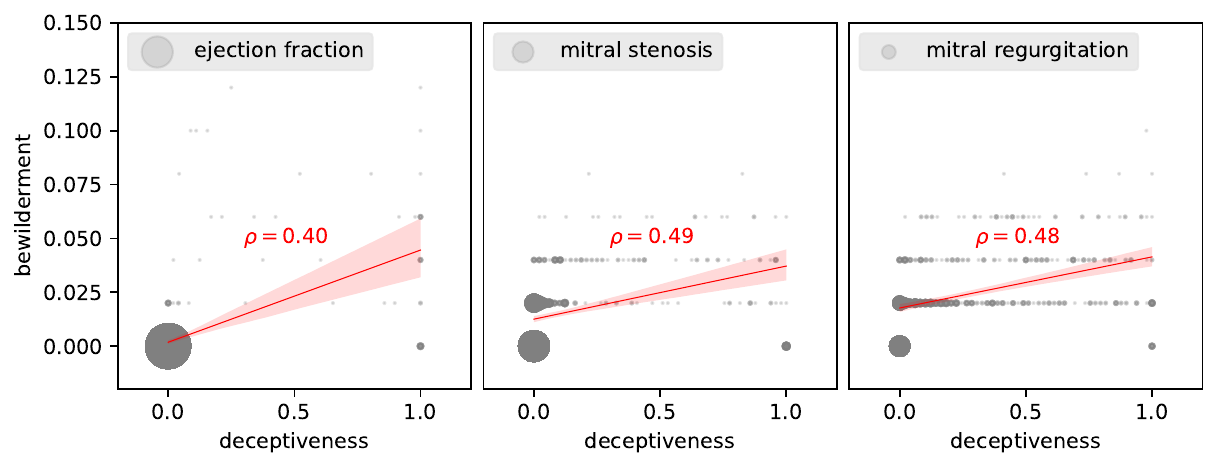}
    \end{subfigure}
    \caption{Two arithmetic problem domains and three clinical problem domains characterized by their empirically measured bewilderment and deceptiveness.}
    \label{fig:ensemble_delta_eta}
\end{figure}

\begin{figure}
    \centering
    \begin{subfigure}[h]{0.48\textwidth}
        \centering
        \begin{tikzpicture}
        \begin{axis}
        [
            title={Difficulty contour plot},
            view={0}{90},
            domain=0:1,
            colormap/viridis,
            unit vector ratio*=1 1 1 
            xlabel={\(\eta\)},
            ylabel={$\delta$},
            xmin=0,
            xmax=1,
            ymin=0,
            ymax=1,
            clip=false,
            scale = 0.95
        ]
        \addplot3[
            contour gnuplot={levels={0.9, 0.8, 0.7, 0.6, 0.5, 0.4, 0.3, 0.2, 0.1}},
            domain=0:1
        ]
        {x+y-x*y};
        \pgfplotsset{
            every axis/.append style={
                extra description/.code={
                    \node at (0.5,-0.15) {$\eta$};
                },
            },
        } 
        \end{axis}
        \end{tikzpicture}
    \end{subfigure}
    \hfill
    \begin{subfigure}[h]{0.48\textwidth}
        \centering
        \begin{tikzpicture}
            \begin{axis}[
                title=Difficulty plot,
                colormap/viridis,
                unit vector ratio=1 1 1,
                scale=1.4,
                xmin=0,
                xmax=1,
                ymin=0,
                ymax=1,
                zmin=0,
                zmax=1,
                xlabel=$\eta$,
                ylabel={$\delta$},
                zlabel={$d$}
            ]
            \addplot3[
                mesh,
                samples=20,
                domain=0:1,
            ]
            {x+y-x*y};
            \addlegendentry{\(d = \eta+\delta-\eta\delta\)}
            \end{axis}
            \end{tikzpicture}
    \end{subfigure}
    \caption{All questions exist on a manifold of difficulty, deceptiveness, and bewilderment within the unit cube.}
    \label{fig:ensemble_unit_cube}
\end{figure}
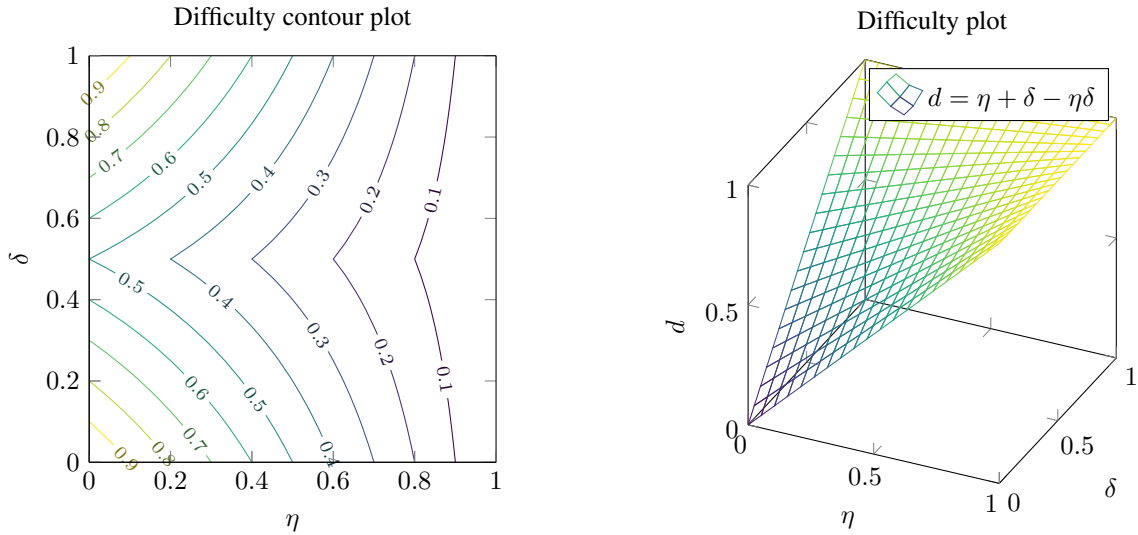

\begin{figure}
    \centering
    \begin{subfigure}[h]{0.48\textwidth}
        \centering
        \begin{tikzpicture}
        \begin{axis}
        [
            title={Maximum frequency contour plot},
            view={0}{90},
            domain=0:1,
            colormap/viridis,
            unit vector ratio*=1 1 1 
            xlabel={\(\eta\)},
            ylabel={$\delta$},
            xmin=0,
            xmax=1,
            ymin=0,
            ymax=1,
            clip=false,
            scale = 0.95
        ]
        \addplot3[
            contour gnuplot={levels={0.9, 0.8, 0.7, 0.6, 0.5, 0.4, 0.3, 0.2, 0.1}},
            domain=0:1
        ]
        {(1-x)*max(y,1-y)};
        \pgfplotsset{
            every axis/.append style={
                extra description/.code={
                    \node at (0.5,-0.15) {$\eta$};
                },
            },
        } 
        \end{axis}
        \end{tikzpicture}
    \end{subfigure}
    \hfill
    \begin{subfigure}[h]{0.48\textwidth}
        \centering
        \begin{tikzpicture}
            \begin{axis}[
                title=Maximum frequency plot,
                colormap/viridis,
                unit vector ratio=1 1 1,
                scale=1.4,
                xmin=0,
                xmax=1,
                ymin=0,
                ymax=1,
                zmin=0,
                zmax=1,
                xlabel=$\eta$,
                ylabel={$\delta$},
                zlabel={$f_{max}$}
            ]
            \addplot3[
                mesh,
                samples=20,
                domain=0:1,
            ]
            {(1-x)*max(y,1-y)};
            \addlegendentry{\(f_{max} = (1-\eta)max(\delta, 1-\delta\)}
            \end{axis}
            \end{tikzpicture}
    \end{subfigure}
    \caption{The expected maximum frequency of answers to a given question, combined with the voting threshold of the ensemble, determine the probability of reaching a consensus.} 
    \label{fig:ensemble_fmax_cube}
\end{figure}
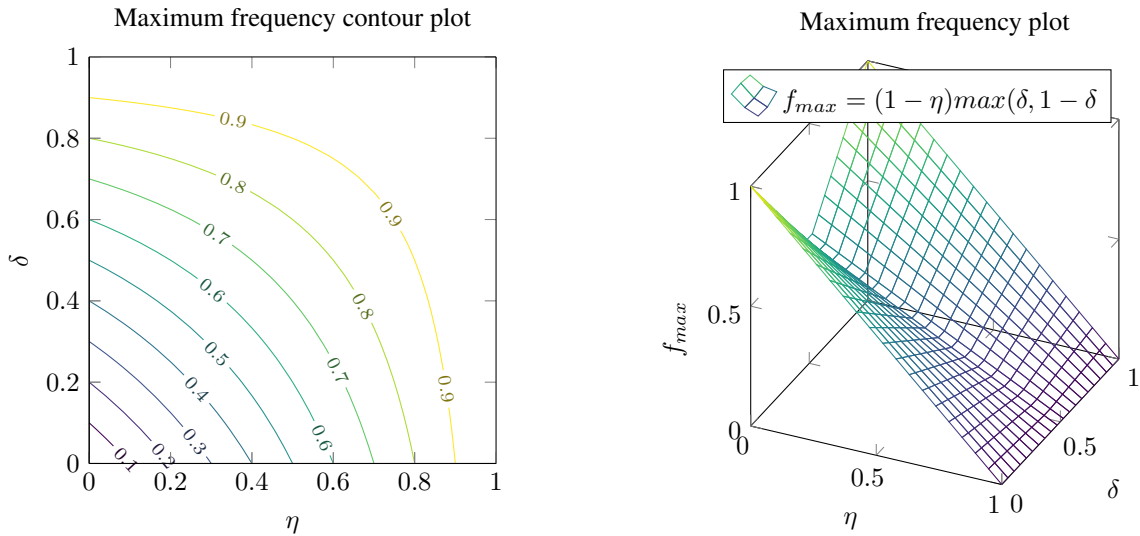

\end{document}